\def\BibTeX{{\rm B\kern-.05em{\sc i\kern-.025em b}\kern-.08em
    T\kern-.1667em\lower.7ex\hbox{E}\kern-.125emX}}
\newcommand{\inv}{^{\raisebox{.4ex}{$\scriptscriptstyle-1$}}}
\newcommand{\invsqrt}{^{\raisebox{.4ex}{$\scriptscriptstyle\hspace{-1pt}-\hspace{-1pt}\frac{1}{2}$}}}
\newcommand{\nicesqrt}{^{\raisebox{.4ex}{$\scriptscriptstyle\frac{1}{2}$}}}
\newcommand{\github}[0]{\url{https://github.com/KrishnaswamyLab/hypergraph-wavelets}}
\newcommand{\linebreakand}{%
  \end{@IEEEauthorhalign}
  \hfill\mbox{}\par
  \mbox{}\hfill\begin{@IEEEauthorhalign}
}
\begin{document}

\title{Hyperedge Representations with Hypergraph Wavelets: Applications to Spatial Transcriptomics
\thanks{
\hspace{-12pt} 
$^\dag$Denotes equal contribution. \\
\textbf{Correspondence: }\texttt{smita.krishnaswamy@yale.edu}.\\
\textbf{Funding:} NSF Career Grant~2047856, NIH~1R01GM130847-01A1, NIH~1R01GM135929-01, and NSF~OIA~2242769.
}
}

\author{
\IEEEauthorblockN{Xingzhi Sun$^\dag$}
\IEEEauthorblockA{\textit{Yale University}}
\and
\IEEEauthorblockN{Charles Xu$^\dag$}
\IEEEauthorblockA{\textit{Yale University}}
\and
\IEEEauthorblockN{Jo\~ao F. Rocha}
\IEEEauthorblockA{\textit{Yale University}}
\and
\IEEEauthorblockN{Chen Liu}
\IEEEauthorblockA{\textit{Yale University}}
\linebreakand
\IEEEauthorblockN{Benjamin Hollander-Bodie}
\IEEEauthorblockA{\textit{Yale University}}
\and
\IEEEauthorblockN{Laney Goldman}
\IEEEauthorblockA{\textit{Harvey Mudd College}}
\and
\IEEEauthorblockN{Marcello DiStasio}
\IEEEauthorblockA{\textit{Yale University}}
\and
\IEEEauthorblockN{Michael Perlmutter}
\IEEEauthorblockA{\textit{Boise State University}}
\and
\IEEEauthorblockN{Smita Krishnaswamy}
\IEEEauthorblockA{\textit{Yale University}}
}

\newtheorem{theorem}{Theorem}[section]
\newtheorem{lemma}[theorem]{Lemma}
\newtheorem{prop}[theorem]{Proposition}
\newtheorem{cor}{Corollary}
\newtheorem{definition}{Definition}[section]
\newtheorem{conj}{Conjecture}[section]
\newtheorem{rem}{Remark}
\newtheorem{proposition}{Proposition}
\maketitle

\begin{abstract}



In many data-driven applications, higher-order relationships among multiple objects are essential in capturing complex interactions. Hypergraphs, which generalize graphs by allowing edges to connect any number of nodes, provide a flexible and powerful framework for modeling such higher-order relationships. In this work, we introduce hypergraph diffusion wavelets and describe their favorable spectral and spatial properties. We demonstrate their utility for biomedical discovery in spatially resolved transcriptomics by applying the method to represent disease-relevant cellular niches for Alzheimer's disease.
\end{abstract}

\begin{IEEEkeywords}
hypergraph, hyperedge, spatial transcriptomics, representation learning, wavelets, Alzheimer's disease
\end{IEEEkeywords}

\section{Introduction}
Graph representation learning methods such as graph embedding~\cite{node2vec}, graph kernel methods~\cite{Weisfeiler_Lehman}, and graph neural networks~\cite{GCN, GAT, GraphSAGE} have significantly improved our ability to model graph-structured data. However, traditional graphs are limited in that they only capture pairwise interactions 
which often fail to model the higher-order relationships inherent in real-world data. To address this issue, we consider hypergraphs~\cite{HypergraphTheory} that generalize graphs by allowing hyperedges to connect multiple nodes thus allowing a more flexible representation of complex data. Despite growing interest in hypergraphs, research efforts have mainly focused on node-level or graph-level tasks, while hyperedge representation learning remains comparatively unexplored~\cite{HGNN, HGCN_HGAT}.

Hyperedge representations are critical in scenarios where interactions involve more than two nodes. In social networks, hyperedges can model group interactions, such as multi-user collaborations or events. In recommendation systems, hyperedges can represent group preferences, reflecting collective influences that go beyond individual choices. Similarly, in biological networks, they capture complex interactions among multiple proteins or molecules within pathways. These examples highlight the importance of hyperedge representations for modeling the nature of interactions in graph-structured data.

The relevance of hyperedges is particularly pronounced in spatial transcriptomics~\cite{ST_1, ST_2, ST_3}, which maps cellular gene expression within the spatial context of tissues. Traditional graph representations often treat cells as nodes and their interactions as edges, overlooking the broader concept of the cellular niche, i.e., the neighborhoods in which cells exist and their interactions with one another. Hyperedges provide a natural way to model these cellular niches and enhance understanding of cellular behaviors such as cell growth and disease progression.

In this work, we introduce hypergraph diffusion wavelets as a framework for hyperedge representation learning. We will present the framework and describe its favorable spectral, spatial, and computational properties. Finally, we apply our approach to spatial transcriptomics data and demonstrate its potential to capture complex cellular niches and advance biomedical discovery.

\section{Hypergraph Preliminaries}


\subsection{Hypergraph representation}
\label{sec: hyperpreliminaries}

A hypergraph $\mathcal{G}=(\mathcal{V},\mathcal{E})$ is a generalized graph in which generalized edges, referred to as hyperedges, can contain more than two nodes. That is, $\mathcal{V}=\{ v_1,\ldots,v_n \}$ is a collection of nodes (also called vertices) and $\mathcal{E} = \{ e_1,\ldots,e_m \}$ is a collection of subsets $\mathcal{V}$ referred to as hyperedges. (In the case where each hyperedge $e_j$ contains exactly two vertices, then $\mathcal{G}$ is equivalent to an ordinary graph.)


Each hypergraph $\mathcal{G}$ with $n$ vertices and $m$ edges can be represented by an $n\times m$ incidence matrix $\mathbf{H}$ defined by 
\begin{equation}
\mathbf{H}[i,j] = 
\begin{cases}
    1 &\text{ if } v_i \in e_j \\
    0 &\text{ if } v_i \notin e_j
\end{cases}.
\label{eqn:incidence_matrix}
\end{equation}
The degree $\text{deg}(v_i)$ of a vertex $v_i$ is defined to be the number of $e_j$ such that $v_i\in e_j$, and the degree $\text{deg}(e_j)$  of a hyperedge $e_j$ is the number of $v_i\in e_j$ (i.e., the cardinality of $e_j$). We let $\mathbf{D}_{\mathcal{V}}$ be the $n\times n$ diagonal vertex-degree matrix with $\mathbf{D}_{\mathcal{V}}[i,i]=\text{deg}(v_i)$ (and $D_{\mathcal{V}}[i,j]=0$ if $i\neq j$) and let $\mathbf{D}_{\mathcal{E}}$ denote the $m\times m$ diagonal edge-degree matrix defined similarly.
In this work, we consider unweighted hypergraphs but note that our methods can be readily extended to the weighted case. 


\subsection{Bipartite expansion}

The bipartite expansion of a hypergraph $\mathcal{G} = (\mathcal{V}, \mathcal{E})$, with $|\mathcal{V}|=n$, $|\mathcal{E}|=m$, is a bipartite graph $\widetilde{\mathcal{G}} = (\widetilde{\mathcal{V}}, \widetilde{\mathcal{E}})$, where the vertex set $\widetilde{\mathcal{V}}=\mathcal{V} \cup \mathcal{E}$ consists of both the vertices and the hyperedges of the original hypergraph $\mathcal{G}$.
The edge set $\widetilde{\mathcal{E}}$ of the expanded graph connects the nodes to their corresponding hyperedges in the original hypergraph. That is each edge $\widetilde{e} \in \widetilde{\mathcal{E}}$ has the form $\widetilde{e}=(v,e)$ where $v\in \mathcal{V}$ was a vertex and $e\in\mathcal{E}$ was a hyperedge in the original hypergraph $\mathcal{G}$. 

\begin{figure}[!htb]
    \centering
\includegraphics[width=0.35\textwidth]{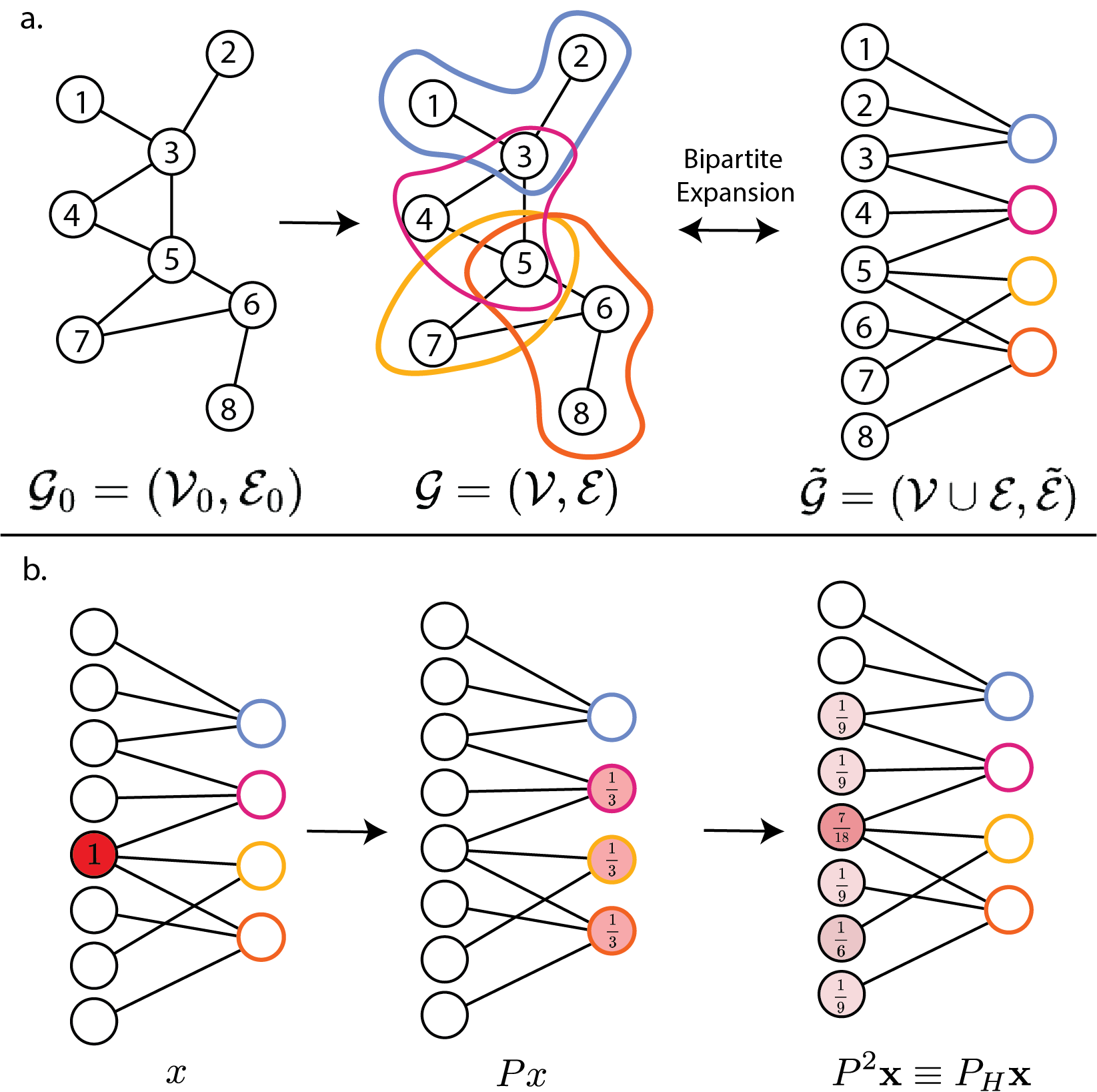}
    \caption{Conceptual illustrations. \textbf{a.} Creation of a hypergraph and the bipartite expansion. \textbf{b.} Diffusion (Lazy Random Walks) on the expanded graph.}
    \label{fig:framework}
\end{figure}


We will let $\widetilde{\mathbf{A}}$ and $\widetilde{\mathbf{D}}$ denote the $(n+m)\times(n+m)$ adjacency and degree matrices of $\widetilde{\mathcal{G}}$ and observe that by construction
\begin{align*}
\widetilde{\mathbf{A}} = 
\begin{bmatrix} 
 0 & \mathbf{H} \\
 \mathbf{H}^\top & 0
\end{bmatrix},\quad \widetilde{\mathbf{D}} = 
\begin{bmatrix} 
 \mathbf{D}_{\mathcal{V}} & 0 \\
 0 & \mathbf{D}_\mathcal{E} \\
\end{bmatrix},
\end{align*}
where $\mathbf{H}$, $\mathbf{D}_\mathcal{V}$, and $\mathbf{D}_\mathcal{E}$ are as in Section \ref{sec: hyperpreliminaries}.

The corresponding symmetrically normalized graph Laplacian is then given by
$\widetilde{\mathbf{L}}_{N} = \mathbf{I} - \widetilde{\mathbf{D}}\invsqrt \widetilde{\mathbf{A}} \widetilde{\mathbf{D}}\invsqrt.$
It is known~\cite{chung1997spectral} that $\widetilde{\mathbf{L}}_N$ is positive semi-definite and can be factorized as $\widetilde{\mathbf{L}}_N = \mathbf{V} \mathbf{\Omega} \mathbf{V}^\top$, where $\mathbf{V}$ is the unitary matrix whose columns are the eigenbasis composed of $\widetilde{\mathbf{L}}_{N}$'s eigenvectors and $\mathbf{\Omega}$ is the diagonal matrix whose entries are $\widetilde{\mathbf{L}}_{N}$'s eigenvalues.


\section{Methods}

In this section, we describe an established \cite{zhou2006learning} hypergraph diffusion operator using the bipartite hypergraph expansion. We then provide some additional characterization of the spectral properties of this diffusion operator and implementation details that make it feasible to process large hypergraphs. 

\subsection{Hypergraph Diffusion}

The graph random walk diffusion operator is defined as:
\begin{align}
\widetilde{\mathbf{P}} &= \widetilde{\mathbf{A}}\widetilde{\mathbf{D}}\inv = \mathbf{I} - \widetilde{\mathbf{D}}\nicesqrt \widetilde{\mathbf{L}}_N \widetilde{\mathbf{D}}\invsqrt 
= \widetilde{\mathbf{D}}\nicesqrt \left( \mathbf{I} - \widetilde{\mathbf{L}}_N \right) \widetilde{\mathbf{D}}\invsqrt\nonumber\\
&= \widetilde{\mathbf{D}}\nicesqrt \mathbf{V} \left( \mathbf{I} - \mathbf{\Omega} \right) \mathbf{V}^\top \widetilde{\mathbf{D}}\invsqrt.\label{eqn: I minus L}
\end{align}
We may also write the diffusion operator:
\begin{align*}
\widetilde{\mathbf{P}} =
\widetilde{\mathbf{A}}\widetilde{\mathbf{D}}\inv \hspace{-.05in}&=
\begin{bmatrix} 
 0 & \mathbf{H} \\
 \mathbf{H}^\top & 0 \\
\end{bmatrix}\hspace{-.08in}
\begin{bmatrix} 
 \mathbf{D}_\mathcal{V}\inv & 0 \\
 0 & \mathbf{D}_\mathcal{E}\inv \\
\end{bmatrix}
\hspace{-.05in}=\hspace{-.05in}
\begin{bmatrix} 
 0 & \mathbf{H}\mathbf{D}_\mathcal{E}\inv \\
 \mathbf{H}^\top \mathbf{D}_\mathcal{V}\inv & 0 \\
\end{bmatrix}
\end{align*}

which implies
\begin{align}\label{eqn: P2}
\widetilde{\mathbf{P}}^2
&=
\begin{bmatrix} 
 \mathbf{H}\mathbf{D}_\mathcal{E}\inv \mathbf{H}^\top \mathbf{D}_\mathcal{V}\inv & 0 \\
 0 & \mathbf{H}^\top \mathbf{D}_\mathcal{V}\inv \mathbf{H}\mathbf{D}_\mathcal{E}\inv \\
\end{bmatrix}.
\end{align}
We observe that $\widetilde{\mathbf{P}}$ is a block diagonal matrix, which is a consequence of $\widetilde{\mathcal{G}}$ being bipartite, and let 
\begin{equation}
\mathbf{P}_H := \mathbf{H}\mathbf{D}_\mathcal{E}\inv \mathbf{H}^\top \mathbf{D}_\mathcal{V}\inv
\label{eqn: PH}
\end{equation}
denote the top left block. 

To better understand $\mathbf{P}_H$, we note that in the case where the $\mathcal{G}$ is a graph (i.e., each hyperedge contains exactly two vertices), $\mathbf{P}_H$ coincides with the transition matrix of a lazy random walk on the vertices of $\mathcal{G}$. That is,
\begin{equation}
\mathbf{P}_H = \frac{1}{2}(\mathbf{I} + \mathbf{A}_\mathcal{V}\mathbf{D}_\mathcal{V}\inv),
\end{equation}
where $\mathbf{D}_\mathcal{V}$ and $\mathbf{A}_\mathcal{V}$ are the vertex-degree and vertex-adjacency matrices of $\mathcal{G}$.
To verify this, we note that each edge $e_j\in E$, has degree two and thus $\mathbf{H}\mathbf{H}^\top = \mathbf{D}_\mathcal{V} + \mathbf{A}_\mathcal{V}$. Therefore, 
\begin{equation*}
\mathbf{P}_H
= \mathbf{H} (\frac{1}{2} \mathbf{I}) \mathbf{H}^\top \mathbf{D}_\mathcal{V}\inv
= \frac{1}{2} \mathbf{H}\mathbf{H}^\top \mathbf{D}_\mathcal{V}\inv 
= \frac{1}{2} \left(\mathbf{I} + \mathbf{A}_\mathcal{V} \mathbf{D}_\mathcal{V}\inv \right).
\end{equation*}

Thus, as observed in \cite{zhou2006learning}, we may interpret $\mathbf{P}_H$ as a hypergraph random walk matrix. In particular, $\mathbf{P}_H$ describes the transition probabilities of a walker on the vertices $\mathcal{V}$ who at each step first chooses a random edge $e$ which is incident to the current vertex $v$ and then chooses a random vertex $v'\in e$. 

To further illustrate the connection between $\mathbf{P}_H$ and random walk matrices, we establish the following result. \begin{proposition}\label{lem: eigenvalues}
The eigenvalues of $\mathbf{P}_H$ are contained in the interval $[0,1]$. \end{proposition}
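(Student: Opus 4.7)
The plan is to derive the eigenvalue bounds on $\mathbf{P}_H$ indirectly, by viewing it as a diagonal block of $\widetilde{\mathbf{P}}^2$ and then controlling the spectrum of $\widetilde{\mathbf{P}}$ through the normalized Laplacian $\widetilde{\mathbf{L}}_N$ of the bipartite expansion.

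First, I would use equation~\eqref{eqn: I minus L} to observe that $\widetilde{\mathbf{P}} = \widetilde{\mathbf{D}}\nicesqrt \mathbf{V}(\mathbf{I}-\mathbf{\Omega}) \mathbf{V}^\top \widetilde{\mathbf{D}}\invsqrt$ exhibits $\widetilde{\mathbf{P}}$ as similar (via the invertible matrix $\widetilde{\mathbf{D}}\nicesqrt$) to the symmetric matrix $\mathbf{V}(\mathbf{I}-\mathbf{\Omega})\mathbf{V}^\top = \mathbf{I} - \widetilde{\mathbf{L}}_N$. Hence the eigenvalues of $\widetilde{\mathbf{P}}$ are precisely $\{1-\omega_i\}$, where $\omega_i$ are the eigenvalues of $\widetilde{\mathbf{L}}_N$. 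Since $\widetilde{\mathbf{L}}_N$ is the symmetrically normalized Laplacian of a graph (the bipartite expansion), the standard result (cited already via \cite{chung1997spectral}) that its eigenvalues lie in $[0,2]$ gives $\sigma(\widetilde{\mathbf{P}}) \subseteq [-1,1]$ and therefore $\sigma(\widetilde{\mathbf{P}}^2) \subseteq [0,1]$.

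Next, I would invoke equation~\eqref{eqn: P2}, which shows that $\widetilde{\mathbf{P}}^2$ is block diagonal with $\mathbf{P}_H$ as its top-left block. Because the characteristic polynomial of a block-diagonal matrix factors as the product of the characteristic polynomials of its diagonal blocks, every eigenvalue of $\mathbf{P}_H$ is an eigenvalue of $\widetilde{\mathbf{P}}^2$. Combined with the previous step, this yields $\sigma(\mathbf{P}_H) \subseteq [0,1]$, which is exactly the claim. (As a sanity check, one could alternatively observe that $\mathbf{D}_\mathcal{V}\invsqrt \mathbf{P}_H \mathbf{D}_\mathcal{V}\nicesqrt = (\mathbf{D}_\mathcal{V}\invsqrt \mathbf{H} \mathbf{D}_\mathcal{E}\invsqrt)(\mathbf{D}_\mathcal{V}\invsqrt \mathbf{H} \mathbf{D}_\mathcal{E}\invsqrt)^\top$ is manifestly symmetric positive semidefinite, which already gives the lower bound $0$ and confirms the eigenvalues are real.)

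I do not expect any serious obstacle in this proof. The one subtlety worth flagging is that $\mathbf{P}_H$ is not symmetric, so one cannot directly argue via the spectral theorem on $\mathbf{P}_H$ itself; however, both the similarity of $\widetilde{\mathbf{P}}$ to $\mathbf{I}-\widetilde{\mathbf{L}}_N$ and the block-diagonal identification of $\mathbf{P}_H$ inside $\widetilde{\mathbf{P}}^2$ handle this cleanly. The only other minor issue is ensuring that by ``eigenvalues of $\mathbf{P}_H$'' we mean algebraic eigenvalues in $\mathbb{C}$; the argument shows they are in fact all real and confined to $[0,1]$.
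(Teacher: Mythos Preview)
Your proposal is correct and follows essentially the same route as the paper: bound the spectrum of $\widetilde{\mathbf{L}}_N$ in $[0,2]$ via \cite{chung1997spectral}, transfer this to $\widetilde{\mathbf{P}}$ by the similarity in \eqref{eqn: I minus L}, square to get $\sigma(\widetilde{\mathbf{P}}^2)\subseteq[0,1]$, and then read off $\sigma(\mathbf{P}_H)$ from the block-diagonal form \eqref{eqn: P2}. Your extra remarks (the block-diagonal characteristic-polynomial factorization and the PSD sanity check) are welcome clarifications but not substantively different from the paper's argument.
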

\begin{proof}
It is known (see e.g., \cite{chung1997spectral}) that the eigenvalues of $\mathbf{\widetilde{L}}_N$ lie in $[0,2],$ which implies that $\mathbf{I}-\mathbf{\widetilde{L}}_N$ has eigenvalues in $[-1,1]$. Since similar matrices have the same eigenvalues,  \eqref{eqn: I minus L} implies that $\mathbf{\widetilde{P}}$ has eigenvalues in $[-1,1]$ and thus $\mathbf{\widetilde{P}}^2$ has eigenvalues in $[0,1]$. The result now follows from \eqref{eqn: P2} since $\mathbf{P}_H$ is the upper left block of $\mathbf{\widetilde{P}}^2$. 
\end{proof}


We also note that similar to traditionally random walks, hypergraph diffusion is spatially localized. A random walker can travel at most distance $d$ in $d$ steps of diffusion, where the hypergraph distance between nodes $v_i, v_j$ is given by the minimal number of hyperedges in a path $p$ of the form $p=(v_{p_1}, e_{p_1}, v_{p_2}, e_{p_2}, \ldots v_{p_{d+1}})$ with $v_{p_1} = v_i$ and $v_{p_2} = v_j$ and we assume $v_i,v_{i+1}\in e_i$. However, if the hypergraph $\tilde{\mathcal{G}}$ is constructed from a $k$-hop lifting map (see section \ref{sub:hypergraph_rep}) applied to a non-hyper graph $\mathcal{G}_0$, the receptive field of $d$ steps of hyper-graph diffusion has length $kd$ on the non-hyper graph $\mathcal{G}_0$. Thus, lifting a graph $\mathcal{G}_0$ to a hypergraph $\mathcal{G}$ allows for information to spread more rapidly in space.  

\subsection{Hypergraph Diffusion Wavelets}
\label{subsec:hypergraph_diffusion_wave}


We construct a hypergraph diffusion wavelets bank using the hypergraph diffusion operator $\mathbf{P}_H$ defined as in \eqref{eqn: PH}. In particular, these wavelets will be constructed using the differences of various powers of $\mathbf{P}_H$. Given a sequence of increasing integers $s_0 \leq s_1\leq\ldots\leq s_J$, with $s_0=0$ and $s_1 =1$, which are interpreted as diffusion scales, we define a hypergraph wavelet filter bank $\mathcal W_J = \{ \Psi_i \}_{i = 0}^{J-1}\cup\{\Phi_J\}$:
\begin{align*}
    \Psi_i = \mathbf{P}_H^{s_i} - \mathbf{P}_H^{s_{i+1}},\quad\Phi_J = \mathbf{P}_H^{s_J},
\end{align*}
and for a signal (function) $\mathbf{x}:V\rightarrow \mathbb{R}$, identified with the vector $\mathbf{x}$ with entries $x_i=\mathbf{x}(v_i)$, we will let $\mathcal{W}_J\mathbf{x}=\{ \Psi_i\mathbf{x} \}_{i = 0}^{J-1}\cup\{\Phi_J\mathbf{x}\}$.
These wavelets hence can capture local and global information. The multi-scale nature of these wavelets can be tuned by choice of the scale sequence $s_0, s_1, \ldots, s_J$. 

\subsection{Complexity}

Compared to other hypergraph signal processing methods, our approach is fast and memory efficient, therefore making it amenable to large hypergraphs and datasets. Notably, we compute the vectors $\Psi_i\mathbf{x}=\mathbf{P}_H^{s_i}\mathbf{x}-\mathbf{P}_H^{s_{i+1}}\mathbf{x}$ through sparse matrix-vector multiplications and vector-vector subtractions (see \cite{tong2024learnable}, Section V). Therefore, per signal, the wavelet transform has complexity $O((\bar{d_v} n + \bar{d_e} m)s_J)$ where $\bar{d_v}$ and $\bar{d_e}$ are the average vertex and hyperedge degree respectively. Thus there are significant computational savings when compared with hypergraph signal processing methods which rely on eigendecomposition or tensor decomposition, \textcolor{black}{and can have complexity $O(n^2 + n\log(d_v^\star-1))$ where $d_e^\star$ denotes the maximum edge degree }\cite{zhang2019introducing}. 

\section{Modeling Spatial Transcriptomics Data}
Spatial transcriptomics has emerged as a powerful tool for biomedical discovery, enabling the measurement of spatially-resolved RNA transcript abundances within tissues~\cite{ST_1, ST_2, ST_3}. Given the critical role of spatial organization in biological processes, mapping molecular pathways within their native spatial contexts has driven significant research, as well as diagnostic and therapeutic advances~\cite{ST_tissueArch, ST_tumor_morphology, ST_disease_traj}.  Although techniques for analyzing spatial transcriptomic data have expanded in recent years, existing methods still lack the capability to generate unsupervised multiscale representations of tissue contexts~\cite{ST_challenges, GraphST}. Such representations are essential for making meaningful inferences about the biological implications of coordinated cellular functions in different samples or conditions~\cite{chang2022define, zahedi2024deep, gao2024review}.

\subsection{Hypergraph Modeling of Spatial Transcriptomic}
\label{sub:hypergraph_rep}
Existing graph-based methods for analyzing spatial transcriptomics data utilize cell-cell graphs~\cite{wu2022space}. In tissues, however, cells frequently interact with a larger functional unit that contains more than their adjacent cells~\cite{de2013functional}. We call this unit the \textit{cellular niche}.

\begin{figure}[!htb]
    \centering
    \includegraphics[width=0.5\textwidth]{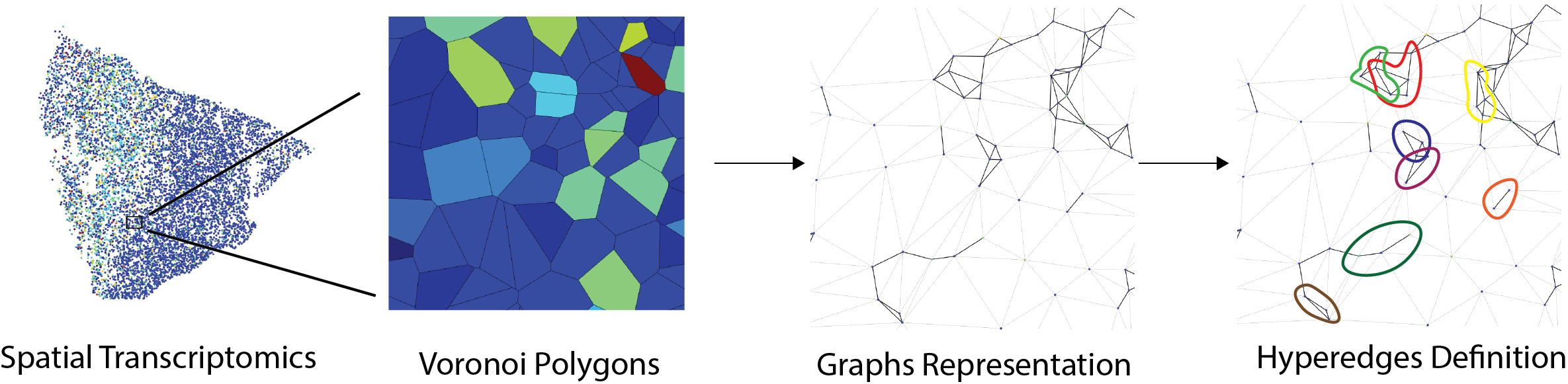}
    \caption{We represent spatial transcriptomics data as a hypergraph. Physically proximate collections of cells play functional roles, motivating the use of hyperedges to represent cellular niches.}
    \label{fig:hypergraph_construction}
\end{figure}



The spatial transcriptomics data $(\mathbf{X}, \mathbf{M}, y)$ consists of the spatial location matrix $\mathbf{X} = \left(x_i^{(j)} \right)_{n\times 2}$,  gene expression matrix $\mathbf{M} = \left(c_{i}^{(k)} \right)_{n\times q}$, where $c_i^{(k)}$ denotes the expression level of the $k$-th gene in the $i$-th cell, $n$ denotes the number of cells, and $q$ the number of genes. Additionally, cell type labels $y=(y_1,\dots,y_n)^\top$ are assumed to be given for each cell. These may be computed from transcriptional data using a variety of approaches~\cite{Seurat, SingleR, scBERT, wang2022benchmarking}.


In a process illustrated in Figure \ref{fig:hypergraph_construction}, we begin by constructing a cell-cell graph $\mathcal{G}_0=(\mathcal{V}_0, \mathcal{E}_0)$, where the vertices $\mathcal{V}_0$ are the cells. Similar to \cite{wu2022space}, the edge set $\mathcal{E}_0$ is constructed by first considering the Voronoi decomposition about each of the cells and then placing edges between pairs of cells whose regions are physically adjacent. (Alternatively, the cell-cell graph could be constructed as a $k$-NN graph or via Delaunay triangulation \cite{palla2022squidpy}.) $\mathcal{G}_0$ is then lifted into a hypergraph $\mathcal{G} = (\mathcal{V}, \mathcal{E})$, where the vertex set $\mathcal{V} = \mathcal{V}_0$ is still the cells and the edges are defined via $k$-hop neighborhoods on $\mathcal{G}_0$. That is, the edges $e\in \mathcal{E}$ are all the sets of the form $e=\{w: d(v,w)\leq k$\} for some $v\in \mathcal{V}$ (where $d(v,w)$ denotes the length of the shortest path between $v$ and $w$, with $d(v,v)=0$). In practice, we find that $k=3$ provides good representations. 



\subsection{Hyperedge Features}
Features for the hyperedges are created to incorporate the available transcriptional, spatial, and cell type information. As is customary for single-cell analysis, transcriptional information in the form of normalized, log-transformed gene expression counts are utilized as a starting point. Within each hyperedge, we compute the following features:
\begin{enumerate}
    \item The average of each transformed gene count in the neighborhood. \label{hyp_feature: average}
    \item The correlation between pairs of genes in the hyperedge.\label{hyp_feature: correlation}
    \item The correlation between transformed gene counts and one step of diffusion applied to transformed counts. \label{hyp_feature: diff_correlation} 
    \item Cell type count. Here, three granularities of the cell label are used: cell type, subclass, and supertype. At each level of granularity, the counts for each type are summed. \label{hyp_feature: cell_type}
\end{enumerate}

These features allow for the natural incorporation of niche information. Feature \ref{hyp_feature: average} directly captures transcriptional information, while Features \ref{hyp_feature: correlation} and \ref{hyp_feature: diff_correlation} provide a sense of coexpression and spatial localization of genes. Feature \ref{hyp_feature: cell_type} makes use of hierarchical cell type knowledge and provides a high-level view of neighborhood composition. These features are stored in a matrix denoted $\mathbf{z}$. This approach is generic, and easily allows for the incorporation of domain specific knowledge. 



\subsection{Hyperedge Representations via Hypergraph Wavelets}

To make use of the hyperedge features, we consider the dual of the hypergraph, denoted $\mathcal{G}^\star$, which is attained by interchanging the nodes and edges of the hypergraph, so $\mathcal{V}^\star = \{e_1, \ldots, e_m\}$ and $\mathcal{E}^\star = \{v_1, \ldots, v_n\}$ with incidence matrix $\mathbf{H}^\star = \mathbf{H}^T$. 
To get representations for hyperedges belonging to the original graph, the wavelets are computed based on $\mathcal{G}^\star$, and the wavelet features $\mathcal{W}_J \mathbf{z} $
are computed as described in section \ref{subsec:hypergraph_diffusion_wave}. $\mathcal{W}_J \mathbf{z} $ is used as the representation of the cellular niches. 




\section{Results}

We apply our method to the Seattle Alzheimer’s Disease Brain Cell Atlas~(SEA-AD) dataset~\cite{gabitto2023integrated} which contains MERFISH spatial transcriptomic profiling of tissues from the middle temporal gyrus of donors at different stages (as defined by neuropathologic Braak stage) of Alzheimer's disease. We find that hyperedge wavelets produce a rich representation of cellular niches that can capture disease progression, thereby enabling the identification of disease-relevant cellular niches.\footnote{Our code is available at \github.}

\begin{table}[!h]
    \centering
    \caption{Comparison among unsupervised methods. Diversity is measured by the Vendi score~\cite{dan2023vendi}, and representation quality is probed by logistic regression to predict the Braak stage.}
    \scalebox{0.74}{
    \begin{tabular}{lccccc}
    \toprule
    & Diversity & & \multicolumn{3}{c}{Linear Probing} \\
    \cmidrule{2-2} \cmidrule{4-6}
    & Vendi$\uparrow$ & & Accuracy$\uparrow$ & F1$\uparrow$ & AUROC$\uparrow$\\
    \midrule
    Raw node features &
    $7.32 \textcolor{gray}{\pm 3.23}$ && $0.74\textcolor{gray}{\pm 0.004}$ & $0.74\textcolor{gray}{\pm 0.004}$ & $0.90\textcolor{gray}{\pm 0.003}$\\
    Graph wavelets~\cite{coifman2006diffusion} & $\textbf{81.13}\textcolor{gray}{\pm 16.21}$ && $0.96\textcolor{gray}{\pm 0.002}$ & $0.96\textcolor{gray}{\pm 0.002}$ & $\textbf{1.00}\textcolor{gray}{\pm 0.000}$ \\
    Hypergraph diffusion &
    $8.26 \textcolor{gray}{\pm 2.69}$ && $0.90\textcolor{gray}{\pm 0.002}$ & $0.90\textcolor{gray}{\pm 0.002}$ & $0.98\textcolor{gray}{\pm 0.001}$ \\
    Graph AutoEncoder &
    $4.10\textcolor{gray}{\pm 0.53}$ && $0.94\textcolor{gray}{\pm 0.003}$ & $0.94\textcolor{gray}{\pm 0.003}$ & $0.99\textcolor{gray}{\pm 0.001}$  \\
    \midrule
    Hypergraph wavelets \textbf{(ours)} &
    $22.24 \textcolor{gray}{\pm 4.81}$ && $\textbf{0.97}\textcolor{gray}{\pm 0.002}$ & $\textbf{0.97}\textcolor{gray}{\pm 0.001}$ & $\textbf{1.00}\textcolor{gray}{\pm 0.001}$ \\
    \bottomrule
    \end{tabular}
    }
\label{tab:unsupervised_results}
\end{table}

\subsection{Representational Diversity and Organization}

It is desirable that representations of cellular niches reflect the complex nature of the underlying biology and be representative of disease stage. To assess these characteristics, we compute the Vendi score \cite{dan2023vendi}, a diversity metric, and fit a logistic classification model to predict the Braak stage. We contextualize the performance of hypergraph wavelet representations by comparing it with the transcriptional profiles of individual cells, graph diffusion wavelets, a graph neural network based autoencoder, and hypergraph diffusion. 

As demonstrated in Table \ref{tab:unsupervised_results}, hypergraph wavelets produce representations with a higher diversity than the graph autoencoder, one step of hypergraph diffusion, and the direct node features (transcripts at each node). Graph wavelets produce a representation with the highest Vendi score. The linear probing demonstrates that the hypergraph wavelet representations produce a latent space that is well organized by disease progression, as it achieves the highest accuracy on the classification task with an accuracy of $0.97$. The improvements of graph wavelets and hypergraph diffusion over the raw node features demonstrate the utility of wavelets and higher-order structures respectively. The two experiments lead us to hypothesize that hypergraph wavelets strike a useful balance between representational diversity and organization.  


\begin{figure}[!htb]
    \centering
    \includegraphics[width=0.4\textwidth]{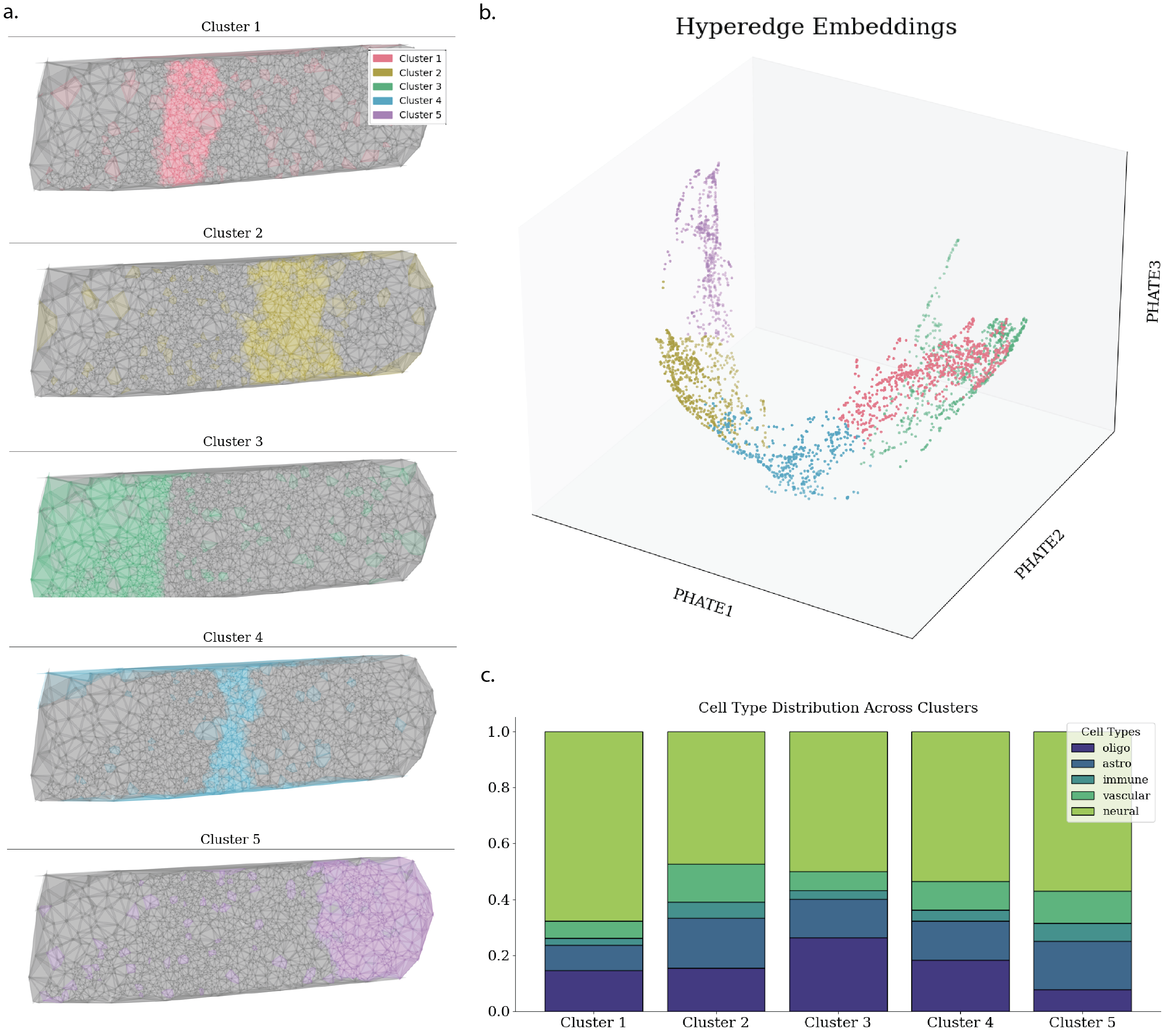}
    \caption{A. Hypergraphs clustered over tissue sample. B. PHATE \cite{moon2019visualizing} space colored by spectral clustering. C. Distribution of cell types on each cluster}
    \label{fig:discovered_cellular_niche}
\end{figure}

\subsection{Visualizing Disease Progression}

In Figure \ref{fig:discovered_cellular_niche}, it is demonstrated that cellular niches can be clustered into niche types with distinct cell compositions. While these clusters are often spatially organized, it is of interest that niches belonging to a cluster can exist across the tissue. Cellular niche representations from pairs of Braak stages are visualized together in Figure \ref{fig:braak_pairs}. The cellular niche representations have overlapping and non-overlapping regions, demonstrating the existence of shared and unshared cellular niches between Braak stages, thereby helping to identify niches that typify each individual Braak stage. 


\begin{figure}[!htb]
    \centering
    \includegraphics[width=0.5\textwidth]{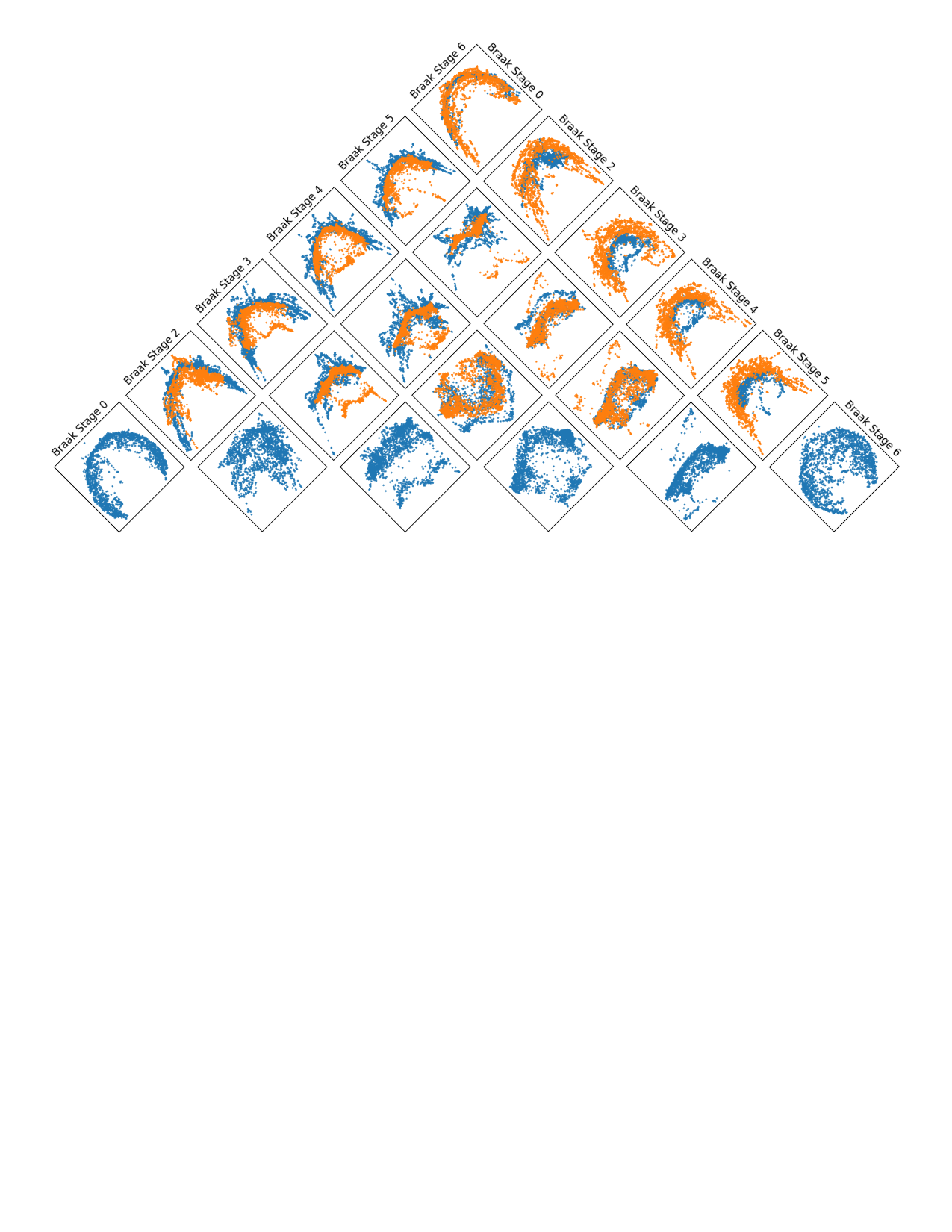}
    \caption{Visualization of pairs of cellular neighborhood representations derived from hypergraph wavelets at different Braak stages, projected into two dimensions using PHATE \cite{moon2019visualizing}. Blue points represent cellular neighborhoods at the Braak stage indicated by the column name, while orange points correspond to the stage indicated by the row name.}
    \label{fig:braak_pairs}
\end{figure}

\section{Conclusion}

\par Spatially resolved transcriptomics data captures cellular niches which consist of multiple cells in simultaneous interaction. To capture these higher order interactions, we model the spatial transcriptomics data using a hypergraph.

We propose hypergraph diffusion wavelets to perform efficient and multiscale signal processing on hypergraphs and demonstrate its utility by generating biologically meaningful representations of cellular niches in Alzheimer's patients.

\newpage
\bibliography{references}
\bibliographystyle{IEEEtran}

\end{document}